%% file: main.tex
\documentclass[pdflatex,sn-mathphys-num]{sn-jnl}

\usepackage{graphicx}
\usepackage{amsmath,amssymb,amsfonts}
\usepackage{array}

\graphicspath{{figures/}{./}}
\newcommand{\vect}[1]{\boldsymbol{#1}}
\newcommand{\mat}[1]{\mathbf{#1}}

\newcommand{\norm}[1]{\left\lVert #1\right\rVert}

\theoremstyle{thmstyleone}
\newtheorem{proposition}{Proposition}

\begin{document}

\title[Robot Aware Design of Object Specific Passive Grippers]{Robot Aware Computational Design of Object Specific Passive Grippers for Additive Manufacturing}

\author*[1,2]{\fnm{Abdullah Yahya Abdullah} \sur{Omaisan}}\email{abuod0@gmail.com}
\equalcont{These authors contributed equally to this work.}

\author[1,2]{\fnm{Ibrahim Sheikh} \sur{Mohamed}}\email{ibrahim1sheikh1@gmail.com}
\equalcont{These authors contributed equally to this work.}

\affil[1]{\orgname{Independent Researchers},
\orgaddress{\city{Riyadh}, \country{Saudi Arabia}}}

\affil[2]{\orgname{QSS AI and Robotics Lab},
\orgaddress{\city{Riyadh}, \country{Saudi Arabia}}}

\abstract{
This paper presents an end-to-end computational pipeline that converts a selected object mesh, a measured object state, and a selected six-axis robot into an object-specific, unactuated, additively manufacturable gripper.  The method couples exact-mesh RGB-D/ICP pose registration, deterministic surface-contact sampling, uncertainty-aware wrench screening, selection among six passive capture mechanisms, object-conformal surface synthesis, full-orientation robot inverse kinematics, a swept-volume-aware manufacturing domain, directional fused-deposition finite-element screening, and constrained three-dimensional SIMP topology optimization.  Unlike workflows that treat grasp selection, tool geometry, motion, and structural design as separate problems, every exported design is bound to the source mesh, object pose, robot flange, contact set, and insertion hypothesis by a traceable design identifier.  We derive the implemented registration, contact, fit-tolerance, finite-element, and density-optimization equations and prove three properties of the numerical construction: nodal load preservation, monotonic compliance sensitivity under SIMP interpolation, and voxel-domain containment after topology post-processing.  Four archived object-specific attempts---a rabbit, camera flange, 3DBenchy, and faceted bust---meet the nominal fit, uncertain-wrench, runtime-sweep, and baseline/post-topology FEA gates.  A deliberately enlarged $\pm3$~mm, $\pm5^\circ$ pose stress check differentiates the designs, retaining 29--134 of 160 simulated trials. Their reconstructed topologies retain 92.0--97.6\% of the FE domain because functional regions are protected.  Archived robot photographs show the corresponding printed assemblies qualitatively, while nominal material properties and absent coupon-calibrated, instrumented tests keep all four at digital-screening status rather than operational release.
}

\keywords{
computational design, passive gripper, grasp synthesis, robot-aware design, finite-element analysis, topology optimization, additive manufacturing
}

\maketitle

\input{1_Introduction}
\input{2_Related_Works}
\input{3_Methods}
\input{4_Results}
\input{5_conclusion}

\backmatter

\section*{Declarations}

\noindent\textbf{Ethics approval and consent to participate.} Not applicable. This computational engineering study did not involve human participants, human data, animals, or biological materials.

\begingroup
\renewcommand{\bibfont}{\fontsize{7}{8}\selectfont}
\setlength{\bibsep}{0pt}
\bibliography{References}
\endgroup

\end{document}

%% file: 1_Introduction.tex
\section{Introduction}
\label{sec:introduction}

Object-specific grippers improve repeatability but are commonly produced through manual, experience-dependent iteration \cite{Honarpardaz2017}. Additive manufacturing lowers fabrication cost; it does not determine where the tool should contact an object, which passive retention mechanism is appropriate, whether the robot can insert it, or whether the printed load path is safe. Passive grippers intensify this coupling because robot motion and object geometry must provide insertion, locking, carrying, and release without a gripper-side actuator \cite{Kodnongbua2022}.

The proposed framework converts a millimetre-scale object mesh, object pose and mass, robot profile, manufacturing profile, and pick/place task into a traceable design package. The package contains ranked contact evidence, a connected watertight preform with flange interface, an insertion--lock--release program, robustness records, finite-element reports, and an optional topology-optimized STL. Every artifact is bound to the source mesh and design hypothesis; physical evidence remains a separate operational-release requirement. Figure~\ref{fig:pipeline} summarizes the process.

The principal contributions are:
\begin{itemize}
    \item an implemented robot-aware pipeline joining selected-mesh ICP pose estimation, uncertainty-aware contact ranking, six passive capture families, conformal synthesis, full-orientation insertion motion, FEA, and topology optimization;
    \item a directional printed-material model with distributed loads, protected SIMP domains, post-optimization reconstruction, and proofs of force preservation, compliance monotonicity, and voxel-domain containment; and
    \item four traceable, object-specific design records that connect mechanism images to fit, wrench, runtime-sweep, structural, topology, and operational-release evidence.
\end{itemize}

%% file: 2_Related_Works.tex
\section{Related Work}
\label{sec:related}

Automated finger design combines grasp analysis, contact geometry, accessibility, and manufacturability \cite{Honarpardaz2017}. Song \emph{et al.} optimized printable contact primitives with enlarged contact area \cite{Song2018}, while Lim and Pham synthesized object-conformal caging pads by Boolean geometry \cite{Lim2024}. Fit2Form instead learns parallel-jaw finger volumes from predicted grasp fitness \cite{Ha2021}; Dex-Net learns robust grasp quality from large synthetic datasets \cite{Mahler2017}. These approaches automate fingers or grasp proposals but do not produce a flange-connected passive body and its insertion program. The closest passive baseline is Kodnongbua \emph{et al.}, who jointly optimize a printable gripper skeleton and trajectory and validate the tools physically \cite{Kodnongbua2022}. Our contribution is complementary: an auditable engineering pipeline that selects an explicit passive mechanism, binds it to a robot/flange/camera configuration, expands ideal contacts into conformal regions, and carries the design through uncertainty and structural screening.

Rigid CAD-to-range alignment follows the iterative closest point formulation of Besl and McKay \cite{Besl1992}. Here registration is not a detached visualization step: the accepted full pose and exact selected-mesh identity become trajectory inputs.

Geometric restraint, force closure, and wrench-space quality provide the theoretical basis for contact ranking \cite{Nguyen1988,Bicchi1995,Ferrari1992}. Because nominal quality need not survive pose error, Weisz and Allen sampled uncertainty and re-ranked grasps in contact-wrench space \cite{Weisz2012}. We retain this separation: a nonnegative wrench residual cheaply screens ideal three-contact sets, whereas a second deterministic audit perturbs friction, center of mass, gravity direction, local translation, and rotation. Neither score is presented as a measured grasp probability because the manufactured body also relies on distributed surfaces and positive retention.

For structural refinement, density-based SIMP and finite-element sensitivity analysis provide a compact minimum-compliance formulation \cite{Sigmund2001,LiuTovar2014}; morphology-based filters help impose a realizable length scale \cite{Sigmund2007}. Additive manufacture further introduces anisotropy, feature-size, and post-processing constraints \cite{Zhu2021}, while direction-dependent FDM properties are well documented \cite{Ahn2002}. Topology optimization has been applied to lightweight adaptive fingers \cite{Sun2022} and physically validated multi-material soft grippers \cite{Pinskier2024}. Allouzi \emph{et al.} applied topology optimization to a food-serving robot to reduce component mass and energy demand \cite{Allouzi2023}. Here the domain is more restricted: robot motion, conformal fit, flange geometry, camera clearance, positive stops, minimum walls, and an outer skin are fixed before material can be removed.

%% file: 3_Methods.tex
\section{Method}
\label{sec:method}

\begin{figure}[t]
    \centering
    \includegraphics[width=0.98\textwidth]{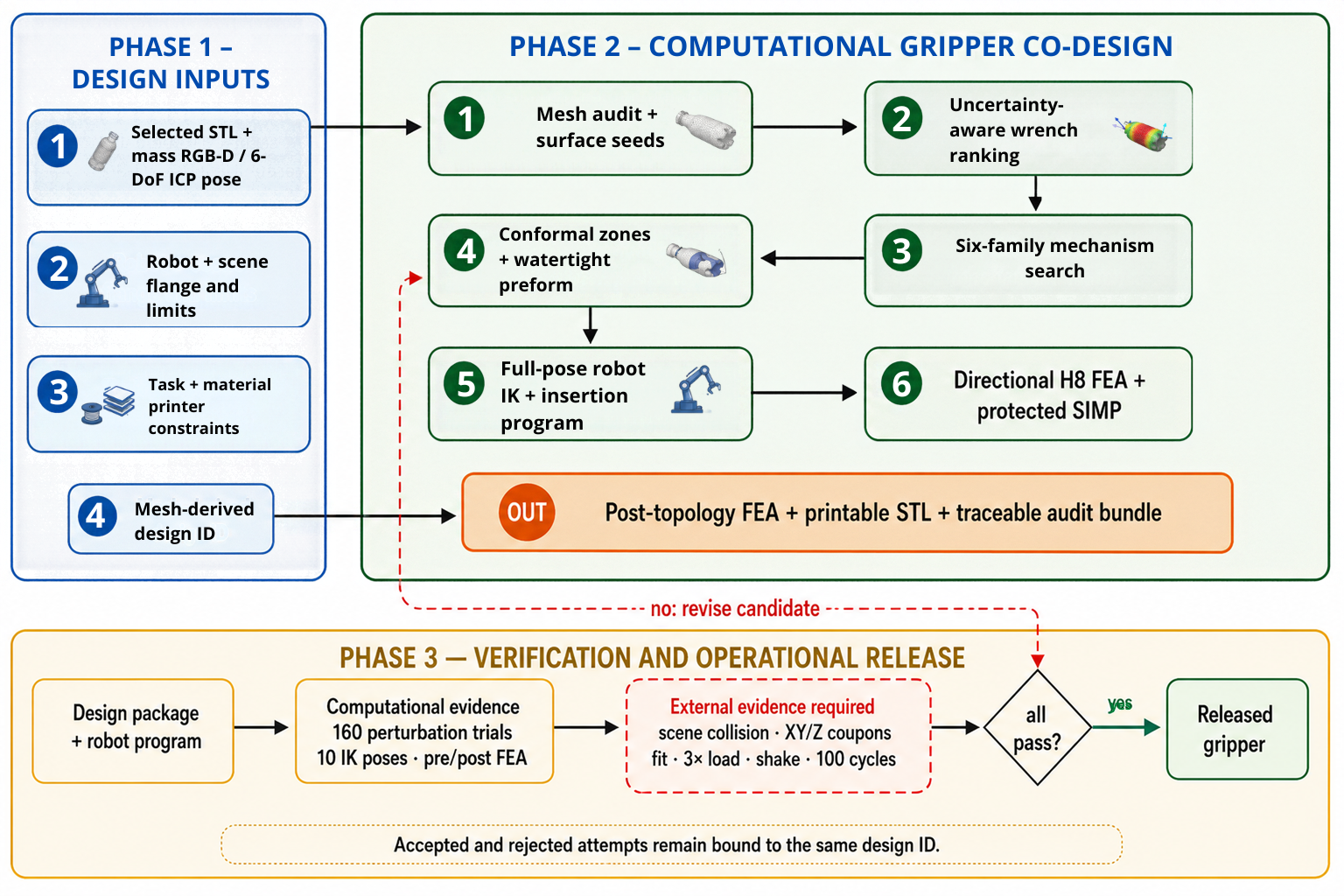}
    \caption{Proposed computational workflow. Phase 1 registers the inputs under a traceable identifier. Phase 2 co-designs contacts, passive geometry, robot motion, and structure. Phase 3 separates computational checks from the external evidence required for operational release; failed gates return to mechanism and geometry search.}
    \label{fig:pipeline}
\end{figure}

\subsection{Formulation, Mesh Audit, and Contacts}

The design input and output are
\begin{equation}
 \mathcal I=(\mathcal M,\mat T_o^w,m,\mathcal R,\mathcal P,\mathcal Q),
 \qquad \mathcal O=(\Omega_g,\tau),
 \label{eq:input}
\end{equation}
where $\mathcal M$ is the object mesh, $\mat T_o^w$ its pose, $m$ its mass, $\mathcal R$ the robot/flange/camera profile, $\mathcal P$ the printer/material profile, and $\mathcal Q$ the task. The solid $\Omega_g$ and joint trajectory $\tau$ must be collision- and keep-out-free, connected, printable, IK-feasible, robust to the declared fit error, and below stress and displacement limits. These are ordered gates, not terms that can compensate for one another. A SHA-256 identifier over the mesh, pose, robot, contacts, family, motion, clearances, and voxel settings is attached to every artifact.

The mesh audit records topology, bounds, area, volume, and watertightness, then rejects floor-adjacent facets. From 600 deterministic surface seeds, the implementation proposes 1,200 three-seed sets and rejects inadequate spacing, degenerate projected area, weak normal opposition, and blocked terminal paths. For contact $i$, an eight-sided Coulomb cone is
\begin{align}
 \vect f_{ij}&=\frac{-\vect n_i+\mu(\cos\theta_j\vect t_{i1}+\sin\theta_j\vect t_{i2})}
 {\norm{-\vect n_i+\mu(\cos\theta_j\vect t_{i1}+\sin\theta_j\vect t_{i2})}},\\
 \vect w_{ij}&=[\vect f_{ij}^{\mathsf T},((\vect p_i-\vect c)\times\vect f_{ij})^{\mathsf T}/L]^{\mathsf T},
 \quad \eta=\min_{\vect\lambda\geq0}\norm{\mat W\vect\lambda-[-\widehat{\vect g};\vect0]}_2,
 \label{eq:nnls}
\end{align}
with $\mu=0.5$, $\theta_j=2\pi j/8$, center $\vect c$, and maximum extent $L$. The residual $\eta\leq0.22$ screens support but is not a force-closure certificate. Candidate score $S=100\vect\alpha^{\mathsf T}\vect q$ uses the implemented weights
$\vect\alpha=(.24,.08,.06,.08,.06,.04,.08,.16,.04,.16)$ for equilibrium, spacing, area, opposition, escape, connector length, support, compactness, outward direction, and local height. Up to 12 candidates continue. A stricter 75-trial audit spans three friction values, five center-of-mass offsets, and five gravity directions; at least 95\% must satisfy residual and coefficient-concentration limits.

\subsection{Mechanism and Geometry Synthesis}

Explainable mesh descriptors select an initial family: hollow exterior $\rightarrow$ rim cradle; fragmented topology $\rightarrow$ topology freeform; persistent waist $\rightarrow$ C-wrench; reliable opening $\rightarrow$ insert/key; elongated accessible body $\rightarrow$ fork/hook; otherwise $\rightarrow$ conformal cradle. Family selection remains revisable during manufacturing and motion checks (Fig.~\ref{fig:strategies}).

\begin{figure}[t]
 \centering
 \includegraphics[width=0.98\textwidth]{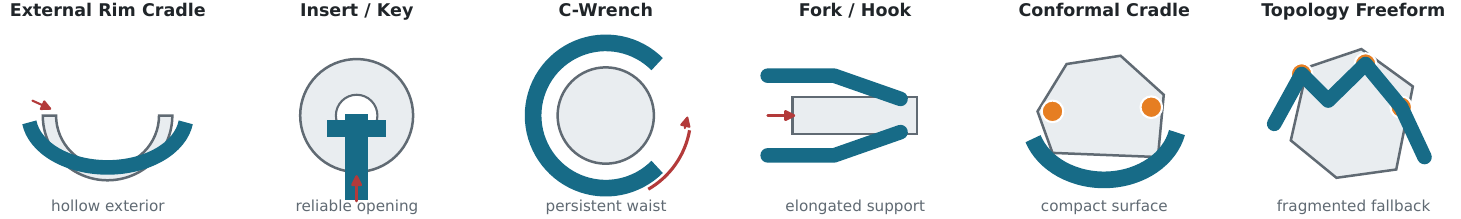}
 \caption{Implemented passive capture families and their insertion or rotation principle.}
 \label{fig:strategies}
\end{figure}

Each accepted anchor is expanded into a projected conformal patch. For projected vertices $\vect v_k$ and local normals $\vect n_k$,
\begin{equation}
 \vect v_k^{\rm in}=\vect v_k+c_f\vect n_k,
 \qquad \vect v_k^{\rm out}=\vect v_k+(c_f+t_p)\vect n_k,
 \label{eq:patch}
\end{equation}
where $c_f=0.18$~mm and $t_p=4$~mm; the rigid export reserves at least 1~mm for a compliant liner. Patches join named family features, a four-hole ISO 9409-1 mount, load paths, and a camera-safe neck. Voxel Boolean operations subtract the object, insertion sweep, mount holes, and keep-outs. Connectivity, retained fit zones, minimum wall, compactness, watertightness, and camera clearance are checked before surface extraction and protected smoothing.

\subsection{Selected-Mesh Registration, Motion, and Fit Robustness}

For a physical pick, the selected STL and its SHA-256 digest are authoritative. The implementation uniformly samples its centered surface into $\mathcal P_M$ and segments the aligned depth frame into $\mathcal P_D$. Forward kinematics and calibrated flange--camera and world--base transforms provide a prior. Seven local yaw seeds ($-18,-10,-5,0,5,10,18^\circ$) initialize robust six-DoF point-to-plane ICP \cite{Besl1992} at voxel/correspondence scales $(5/22,2.5/11,1.2/5.5)$~mm:
\begin{equation}
 (\mat R^*,\vect t^*)=\arg\min_{\mat R\in SO(3),\vect t}
 \sum_i \rho_{\rm T}\!\left(\big[(\mat R\vect p_i+\vect t-\vect q_{\pi(i)})^{\mathsf T}\vect n_{\pi(i)}\big]^2\right),
 \label{eq:icp}
\end{equation}
where $\pi(i)$ is the closest observed point and $\rho_{\rm T}$ is Tukey's robust loss. The best seed maximizes fitness minus 16 times RMSE and is evaluated at 6~mm. Registration must achieve fitness $\geq0.30$, RMSE $\leq4$~mm, and corrections $\leq100$~mm and $\leq40^\circ$. Its full orientation is recovered from
$\mat T_o^w=\mat T_b^w\mat T_f^b\mat T_c^f\mat T_o^c$; changing either the selected path or digest invalidates the pose.

Candidates seed insertion side, angle, rotation, and clearance hypotheses. Object-frame waypoint offsets $\vect r_k^o$, tool rotations $\mat R_{g,k}^o$, and insertion directions $\vect d_k^o$ consume the complete registered orientation:
\begin{equation}
 \vect x_k^w=\vect t_o^w+\mat R_o^w\vect r_k^o,\quad
 \mat R_k^w=\mat R_o^w\mat R_{g,k}^o,\quad
 \vect d_k^w=\mat R_o^w\vect d_k^o.
 \label{eq:fullpose}
\end{equation}
Thus roll and pitch cannot be silently reduced to yaw for a physical plan. At waypoint $k$, inverse kinematics minimizes
\begin{equation}
 \vect q_k^*=\arg\min_{\vect q\in\mathcal Q_{\mathcal R}}
 \big(\norm{\vect x(\vect q)-\vect x_k}_2^2+\beta d_R(\mat R(\vect q),\mat R_k)^2\big)
 \label{eq:ik}
\end{equation}
under the robot limits. The stored program covers home, pre-insert, insert, lock, lift, transfer, place, unlock, and retract. The scene records the ICP pose identifier, selected-mesh path/digest, fitness, and RMSE; physical validation rejects a trajectory without this exact binding. The archived runs then record a finite 233-frame browser runtime sweep of the table, robot, installed tool stack, generated mesh, and attached object.

For translation $\vect t$ and small rotation $\vect\phi$, contact displacement and retained-zone margin are
\begin{align}
 \vect d_i&=\vect t+\vect\phi\times(\vect p_i-\vect c),\\
 \gamma_i&=\min\!\left(a-|\vect d_i\!\cdot\!\vect n_i|,
 0.42r_i-\norm{\vect d_i-\vect n_i(\vect d_i\!\cdot\!\vect n_i)}_2\right),
 \label{eq:fitmargin}
\end{align}
where $a$ combines clearance, preview elasticity, and liner compression. A zone survives when $\gamma_i\geq0$. The nominal deterministic audit uses 160 candidate-ID-seeded trials over per-axis translation $\pm0.8$~mm and rotation $\pm1^\circ$, requiring 95\% family-adequate retention.

\begin{figure}[t]
 \centering
 \includegraphics[width=0.92\textwidth]{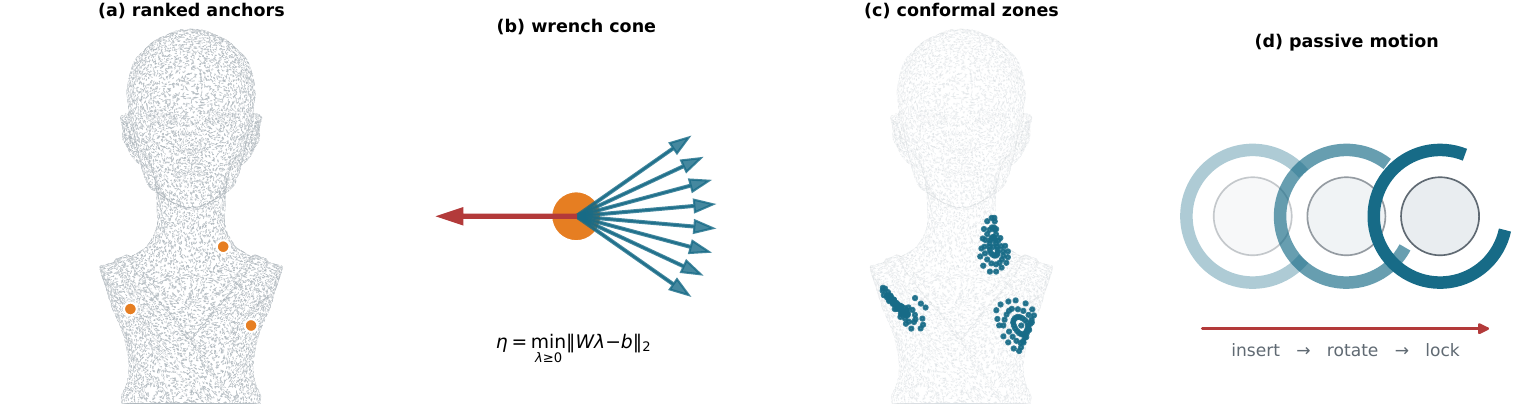}
 \caption{Attempt A4 illustration from sampled facets to ranked anchors, conformal zones, and passive insert--rotate--lock motion.}
 \label{fig:contactmotion}
\end{figure}

\subsection{Directional FEA and Protected Topology Optimization}

For $N_c$ contact zones, safety factor $s=2.5$, and dynamic factor $d=3$, the total structural load is $dsmg=7.5mg$. Each sample force $\vect F$ is distributed to eight nearby nodes using $w_j\propto1/\max(r_j,0.2h)$ and $\sum_jw_j=1$.

\begin{proposition}[Resultant preservation]
The nodal transfer preserves every applied resultant force.
\end{proposition}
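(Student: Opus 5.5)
The plan is to show that the eight nodal forces produced from one sample force sum back to that force, and then to extend the result to all samples by linearity of assembly. Let $\vect F$ act at a sample point $\vect x$ and let $\mathcal N(\vect x)$ be the eight nearest nodes at distances $r_j$. Write the unnormalized weights as $\tilde w_j=1/\max(r_j,0.2h)$. Because the mesh size satisfies $h>0$, each denominator is at least $0.2h>0$, so every $\tilde w_j$ is finite and strictly positive, even when the sample sits on a node. It follows that $Z=\sum_{j\in\mathcal N}\tilde w_j>0$, so the normalized weights $w_j=\tilde w_j/Z$ are well defined and nonnegative, with $\sum_j w_j=1$.

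Next I will define the nodal contribution as $\vect F_j=w_j\vect F$, with the same vector direction at every node. Summing gives $\sum_j\vect F_j=\big(\sum_j w_j\big)\vect F=\vect F$. This is the core identity, and it holds componentwise in all three directions.

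For the global statement, the assembled load vector is $\vect f=\sum_s\sum_{j\in\mathcal N(\vect x_s)}w_{sj}\vect F_s\otimes\vect e_j$, where $\vect e_j$ selects node $j$. Several samples may share a node; this does no harm because assembly is additive. Summing the nodal entries over all nodes and exchanging the order of summation gives $\sum_s\vect F_s$. The total per contact zone and the overall total $7.5mg\,\widehat{\vect g}$ are therefore reproduced exactly.

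I expect the main obstacle to be the well-posedness points rather than the algebra. These are the $0.2h$ floor that prevents division by zero, and the requirement that at least one node lies in the neighborhood. The second needs eight nodes to exist in the mesh, which I would assume for any nontrivial voxel domain; if fewer exist, the same argument works over whatever neighbors are available. If the implementation removes nodes with Dirichlet constraints after distribution, the argument should say that preservation refers to the applied load vector before the constraints are imposed.

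I will also note that the resultant moment is generally not preserved, since the weights are not first-moment consistent. The proposition should therefore be read as a statement about forces only.
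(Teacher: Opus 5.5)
Your proposal is correct and matches the paper's proof: the core step is the identity $\sum_j w_j\vect F=(\sum_j w_j)\vect F=\vect F$, extended by linearity over samples and zones. Your other points are sound elaborations the paper leaves implicit: the $0.2h$ floor that makes the weights positive and normalizable, the caveat about constrained nodes, and the observation that moments are not preserved. Writing the total as $7.5mg\,\widehat{\vect g}$, however, assumes a load direction the paper never states, and the proof does not need it.
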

\begin{proof}
$\sum_jw_j\vect F=(\sum_jw_j)\vect F=\vect F$; linearity extends the identity to all samples and zones.
\end{proof}

The approximately 3~mm mesh uses trilinear eight-node hexahedra, fixed mounting bosses, and $2^3$ Gauss integration:
\begin{equation}
 \mat K_e=\int_{\Omega_e}\mat B_e^{\mathsf T}\mat D\mat B_e\,d\Omega,
 \qquad \mat K\vect u=\vect f.
 \label{eq:fea}
\end{equation}
$\mat D=\mat S^{-1}$ is transversely orthotropic, with $S_{11}=1/E_x$, $S_{22}=1/E_y$, $S_{33}=1/E_z$, reciprocal Poisson coupling, and directional shear terms. Nominal PLA values are $E_x=E_y=1500$~MPa, $E_z=975$~MPa, 12/6~MPa XY/Z allowable stress, and a 1~mm displacement limit. Center-point stress recovery reports
\begin{multline}
 \sigma_{\rm vm}=\big[\tfrac12((\sigma_x-\sigma_y)^2+(\sigma_y-\sigma_z)^2
 +(\sigma_z-\sigma_x)^2)\\
 +3(\tau_{xy}^2+\tau_{yz}^2+\tau_{zx}^2)\big]^{1/2}.
 \label{eq:vm}
\end{multline}
Using this scalar with the lower directional allowable is a conservative digital screen, not a calibrated orthotropic failure criterion or physical certification.

With protected set $\mathcal P$, the SIMP problem is
\begin{align}
 \min_{\vect\rho}\;&C=\vect f^{\mathsf T}\vect u,\quad
 \mat K(\vect\rho)\vect u=\vect f,\quad N^{-1}\sum_e\rho_e\leq\bar\rho,\\
 &\rho_e=1\ (e\in\mathcal P),\qquad
 \mat K_e(\rho_e)=[\epsilon+(1-\epsilon)\rho_e^3]\mat K_e^0,
 \label{eq:simpopt}
\end{align}
where $\bar\rho=0.82$, $\epsilon=10^{-6}$, and $\mathcal P$ contains the mount, neck, contacts, stops, functional shell, and outer skin. A 5~mm density filter and optimality-criteria updates are followed by thresholding, minimum-radius void filtering, mount-component restoration, and post-topology FEA.

\begin{proposition}[Compliance monotonicity]
For positive-definite free-DOF stiffness, increasing an element density cannot increase compliance.
\end{proposition}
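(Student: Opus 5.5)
The plan is the standard adjoint sensitivity argument for minimum compliance, specialized to the SIMP interpolation in \eqref{eq:simpopt}. Fix all densities except $\rho_e$ and restrict $\mat K(\vect\rho)$ and $\vect f$ to the free degrees of freedom, i.e.\ eliminate the fixed mounting bosses. By hypothesis this reduced $\mat K$ is symmetric positive definite. The interpolation $\epsilon+(1-\epsilon)\rho_e^3$ is positive for $\rho_e\in[0,1]$, and the loads are independent of $\vect\rho$, since the Resultant-preservation proposition fixes them through geometric weights alone. Hence $\vect u(\vect\rho)=\mat K^{-1}\vect f$ is a smooth function of $\rho_e$, and
\[
C(\rho_e)=\vect f^{\mathsf T}\vect u=\vect u^{\mathsf T}\mat K\vect u .
\]

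\textbf{Step 1: differentiate the state equation.} Differentiating $\mat K\vect u=\vect f$ gives $\mat K\,\partial\vect u/\partial\rho_e=-(\partial\mat K/\partial\rho_e)\vect u$.

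\textbf{Step 2: compute the compliance derivative.} Using the symmetry of $\mat K$,
\[
\frac{\partial C}{\partial\rho_e}=\vect f^{\mathsf T}\frac{\partial\vect u}{\partial\rho_e}=\vect u^{\mathsf T}\mat K\frac{\partial\vect u}{\partial\rho_e}=-\vect u^{\mathsf T}\frac{\partial\mat K}{\partial\rho_e}\vect u .
\]

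\textbf{Step 3: insert the interpolation law.} Only element $e$ depends on $\rho_e$, so $\partial\mat K/\partial\rho_e$ is the assembled contribution $3(1-\epsilon)\rho_e^2\mat K_e^0$. Therefore
\[
\frac{\partial C}{\partial\rho_e}=-3(1-\epsilon)\rho_e^2\,\vect u_e^{\mathsf T}\mat K_e^0\vect u_e .
\]

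\textbf{Step 4: establish the sign.} $\mat K_e^0=\int\mat B_e^{\mathsf T}\mat D\mat B_e\,d\Omega$ is positive semidefinite because $\mat D=\mat S^{-1}$ is positive definite. That follows from the transversely orthotropic compliance $\mat S$ being positive definite for admissible PLA constants, which I would take as part of the material model. So $\vect u_e^{\mathsf T}\mat K_e^0\vect u_e\ge0$. With $\rho_e^2\ge0$ and $1-\epsilon>0$, the derivative is $\le0$. Integrating along $\rho_e\mapsto\rho_e'$ for $\rho_e'>\rho_e$ gives $C(\rho_e')\le C(\rho_e)$.

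\textbf{Finite-increment alternative.} This version avoids calculus and handles non-infinitesimal changes. Write $\mat K'=\mat K+\Delta$ with $\Delta=[(1-\epsilon)(\rho_e'^3-\rho_e^3)]\mat K_e^0\succeq0$. Then $\mat K'\succeq\mat K\succ0$, so $\mat K'^{-1}\preceq\mat K^{-1}$, and
\[
C'=\vect f^{\mathsf T}\mat K'^{-1}\vect f\le\vect f^{\mathsf T}\mat K^{-1}\vect f=C .
\]
I would present this Loewner-order argument as the main proof and keep the derivative formula as a remark. The derivative formula is what the optimality-criteria update actually uses.

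\textbf{Main obstacle.} The delicate point is justifying positive (semi)definiteness and that the load vector is density-independent. Positive definiteness of the free-DOF $\mat K$ is supplied by the hypothesis. Positive semidefiniteness of each $\mat K_e^0$ depends on $\mat D$ being positive definite, which must be stated as a condition on the orthotropic constants: positive moduli and Poisson ratios satisfying $|\nu_{ij}|<\sqrt{E_i/E_j}$ together with the determinant condition. I would also note that protected elements with $\rho_e=1$ are excluded from the claim trivially, and that the density filter composes with a nonnegative weighted average, so the sign carries over to the design variables.
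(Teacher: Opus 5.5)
Your proposal is correct. Steps 1--4 are the paper's proof: it also differentiates $\mat K\vect u=\vect f$ to get $\partial C/\partial\rho_e=-\vect u_e^{\mathsf T}(\partial\mat K_e/\partial\rho_e)\vect u_e$, and concludes from positive semidefiniteness of $3(1-\epsilon)\rho_e^2\mat K_e^0$. The paper stops at the sign of the sensitivity, however. It leaves implicit both the integration from $\rho_e$ to $\rho_e'$ and the requirement that $\mat D$ be positive definite, and you supply both.

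The Loewner-order argument you would make the main proof is a genuinely different route. Instead of differentiating the state equation, it uses that inversion reverses order on positive-definite matrices, so $\mat K'=\mat K+\Delta\succeq\mat K\succ0$ gives $C'=\vect f^{\mathsf T}\mat K'^{-1}\vect f\le C$ in one line. This buys several things directly:
\begin{itemize}
\item the claim for finite increments, with no smoothness assumption;
\item the same for simultaneous increases of several densities, since a sum of positive semidefinite increments is still positive semidefinite;
\item validity for any nondecreasing interpolation, not only the cubic;
\item positive definiteness of every later design, carried forward automatically from the starting one.
\end{itemize}
What it does not give is the explicit sensitivity $-3(1-\epsilon)\rho_e^2\vect u_e^{\mathsf T}\mat K_e^0\vect u_e$. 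The optimality-criteria update and the filter chain rule consume that quantity, which is why the paper's formulation (your Step 3) is the one tied to the implementation.

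One small attribution point: density independence of $\vect f$ follows from the weights $w_j\propto1/\max(r_j,0.2h)$ depending only on geometry, not from the resultant-preservation identity itself. The paper also takes this for granted.
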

\begin{proof}
Equilibrium differentiation yields
$\partial C/\partial\rho_e=-\vect u_e^{\mathsf T}(\partial\mat K_e/\partial\rho_e)\vect u_e\leq0$ because
$\partial\mat K_e/\partial\rho_e=3(1-\epsilon)\rho_e^2\mat K_e^0$ is positive semidefinite.
\end{proof}

\begin{proposition}[Voxel containment]
The post-processed voxel design $\Omega_{\rm top}^h$ is a subset of the collision-approved preform $\Omega_{\rm pre}^h$.
\end{proposition}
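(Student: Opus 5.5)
The plan is to model every post-processing step as an operator on voxel indicator sets over the fixed FE grid, and then show that each operator maps subsets of $\Omega_{\rm pre}^h$ to subsets of $\Omega_{\rm pre}^h$. The optimization domain is the voxelization $\Omega_{\rm pre}^h$ of the collision-approved preform, so the SIMP variables $\rho_e$ exist only for $e\in\Omega_{\rm pre}^h$. Any density field in (\ref{eq:simpopt}) is therefore supported on the preform. Writing $\Omega_{\rm top}^h=\Phi_k\circ\cdots\circ\Phi_1(\vect\rho)$ for the sequence thresholding, void filtering, and mount-component restoration, containment follows by induction once each $\Phi_i$ is shown to satisfy $\Phi_i(A)\subseteq\Omega_{\rm pre}^h$ whenever $A\subseteq\Omega_{\rm pre}^h$.

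The steps, in order, are as follows.

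\emph{Thresholding.} The set $\{e:\tilde\rho_e\geq\rho_t\}$ is taken over indices of the preform grid only. The 5~mm density filter averages only within the design domain, because elements outside it carry no variable, so it produces no new indices. Hence this set lies in $\Omega_{\rm pre}^h$.

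\emph{Minimum-radius void filtering.} I treat this as removal of voxels, or as a morphological operation followed by intersection with the domain. Concretely, I will argue that the implementation computes $A'=\mathcal F(A)\cap\Omega_{\rm pre}^h$, so containment holds trivially.

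\emph{Protected-set reinsertion.} Setting $\rho_e=1$ for $e\in\mathcal P$ adds only $\mathcal P$, and $\mathcal P\subseteq\Omega_{\rm pre}^h$ because the mount, neck, contacts, stops, shell, and skin are preform features.

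\emph{Mount-component restoration.} This keeps the connected component of $A$ containing the mount, possibly unioned with preform voxels of that component. Both the component (a subset of $A$) and the restored voxels (subsets of $\Omega_{\rm pre}^h$) lie in the preform.

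Unions of subsets and subsets of subsets stay in $\Omega_{\rm pre}^h$, which completes the induction.

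The main obstacle is the void filter and any closing or dilation-type morphology, since these are the only steps that could add voxels outside the domain. My first approach is to rely on the fact that all operations act on the fixed preform grid, with exterior voxels (object, insertion sweep, holes, keep-outs) masked to zero, and to make that masking explicit as the final intersection with $\Omega_{\rm pre}^h$. If the implementation instead dilates freely, I would state containment only after this final masking step. Containment is then by construction, and collision approval is inherited because a subset of a collision-free solid is collision-free.
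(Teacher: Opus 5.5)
Your proposal is correct and matches the paper's own by-construction argument: each post-processing step either only removes voxels or, when it restores or morphologically grows material, is intersected with $\Omega_{\rm pre}^h$, and your step-by-step induction is a more granular version of the paper's one-line proof (the paper also lists mount-hole recarving, which subtracts material and so cannot break containment). Your closing remark that collision-freeness is inherited holds only for the voxel set; the paper states that the extracted surface still needs its own triangle-mesh collision audit.
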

\begin{proof}
Topology post-processing only removes material; restored or morphologically grown voxels are intersected with $\Omega_{\rm pre}^h$, and mount-hole recarving subtracts material. Hence $\Omega_{\rm top}^h\subseteq\Omega_{\rm pre}^h$.
\end{proof}
Surface extraction still requires the separate triangle-mesh collision audit.

\begin{figure}[t]
 \centering
 \includegraphics[width=0.92\textwidth]{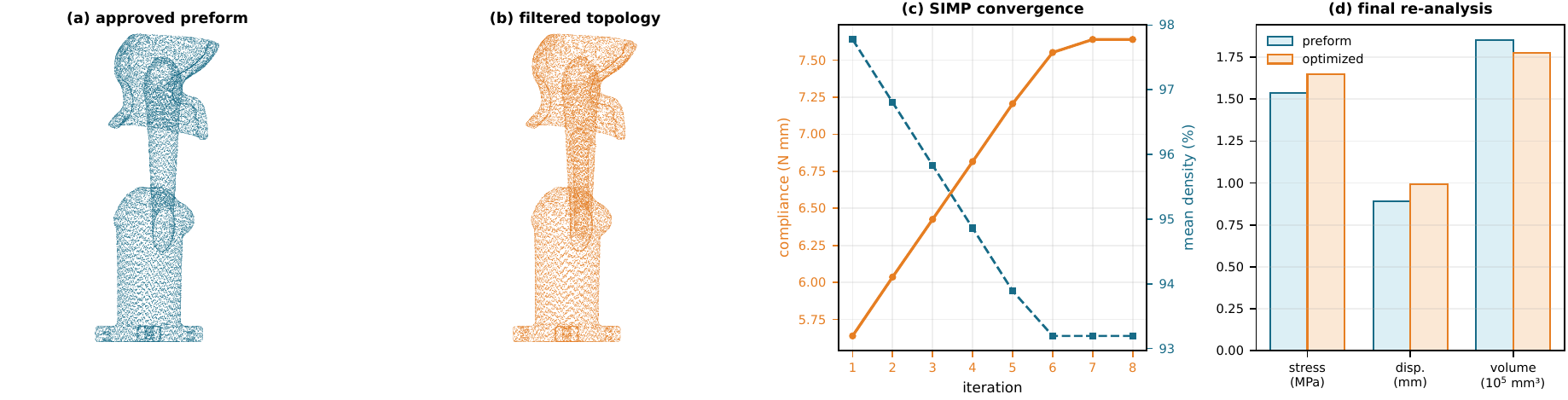}
 \caption{Attempt A4 approved preform, protected-domain SIMP result and convergence, followed by structural re-analysis.}
 \label{fig:structural}
\end{figure}

%% file: 4_Results.tex
\section{Experimental Setup and Results}
\label{sec:experiments}

\subsection{Dataset and Protocol}

Four completed dashboard attempts (A1--A4) were archived under \texttt{paper/scangrip\_attempts}. Each bundle contains the source object geometry, selected gripper, preform, topology result, contact and robustness evidence, robot trajectory, collision audit, FEA reports, and release manifest. Figure~\ref{fig:mechanisms} links each object to the mechanism selected from its accessible retention geometry and planned insertion motion. These are four documented design cases, not a statistical estimate of success over an object population.

\begin{figure}[h!]
 \centering
 \includegraphics[width=0.90\textwidth]{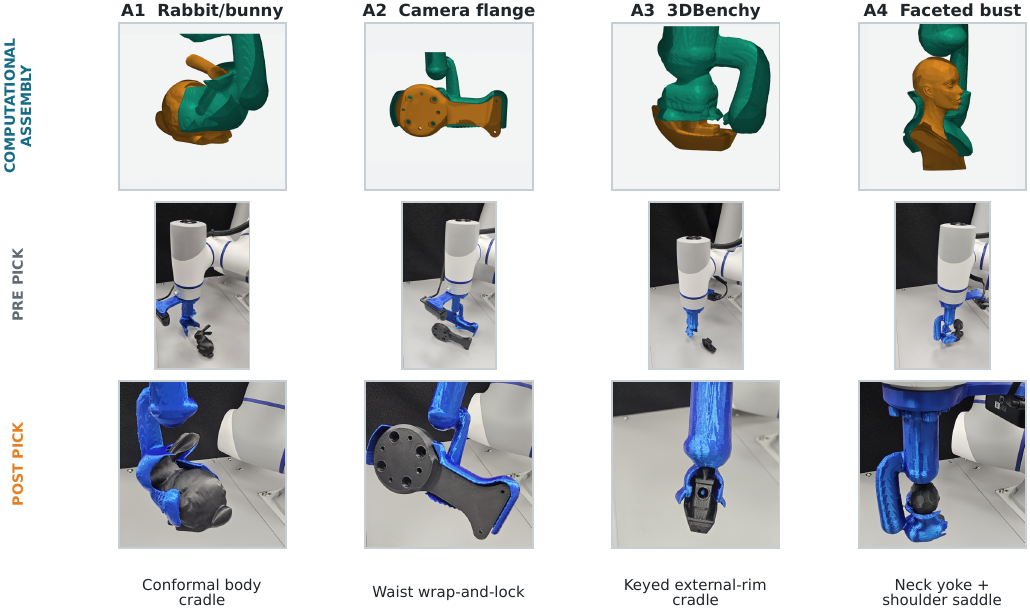}
 \caption{Object-specific passive mechanisms in the four archived attempts. The upper row shows each computational assembly, the middle row shows the robot immediately before pickup, and the lower row shows the printed gripper holding the object after pickup. The photographs document qualitative correspondence, not instrumented physical validation.}
 \label{fig:mechanisms}
\end{figure}

All attempts used a 0.20~kg declared computational load, the DOBOT CR10A with ISO 9409-1 adapter, the Creality K1 Max profile, and nominal PLA properties. Common search settings were 600 requested surface seeds (116--545 remained usable), 1,200 triplets, 12 ranked candidates, 160 local-fit trials, and 75 wrench-uncertainty trials. Geometry used a 0.55~mm manufacturing grid and approximately 3~mm FE elements. Topology settings were $p=3$, a 2~mm optimization grid, approximately 5~mm filtering, 0.82 target density, at most 18 iterations, a 4~mm minimum void radius, and a 3.6~mm protected product wall. All four optimizations converged in eight iterations. The ICP pathway is implemented for physical capture but was not evaluated here as a pose-accuracy experiment.

Table~\ref{tab:structuralmetrics} reports the structural records. Arrows compare the approved preform with the reconstructed, post-topology mesh; the final FE fraction is reported instead of treating target density as guaranteed exported-volume reduction.

\begin{table}[t]
\caption{Directional FEA and topology metrics from the four archived attempts. Arrows denote preform $\rightarrow$ post-topology values; $\rho_f$ is final thresholded FE material.}
\label{tab:structuralmetrics}
\centering
\fontsize{6.8}{7.8}\selectfont
\setlength{\tabcolsep}{2.4pt}
\begin{tabular}{@{}llrrrr@{}}
\toprule
Run & Object & FE elements & $\sigma_v$ (MPa) & $u$ (mm) & $\rho_f$ (\%)\\
\midrule
\input{generated_attempt_rows.tex}
\end{tabular}
\end{table}

\subsection{End-to-End Outcomes}

Table~\ref{tab:evidence} reports margins and worst-case quantities rather than repeating saturated acceptance counts.

\begin{table}[t]
\caption{Object-specific audit margins from the archived attempts. Stress fit gives retained-zone trials under an enlarged $\pm3$~mm, $\pm5^\circ$ pose box. These deterministic screen outputs are not physical success probabilities.}
\label{tab:evidence}
\centering
\fontsize{6.4}{7.4}\selectfont
\setlength{\tabcolsep}{1.8pt}
\begin{tabular}{@{}llrrrrrl@{}}
\toprule
& & \multicolumn{2}{c}{Local-fit audit} & \multicolumn{2}{c}{Wrench audit} & \multicolumn{2}{c}{Outcome}\\
\midrule
Run & Object & $m_{\min}$ (mm) & Stress fit & $r_{w,\max}$ & $c_{w,\max}$ & Sweep & Physical\\
\midrule
\input{generated_evidence_rows.tex}
\end{tabular}
\end{table}

Here $m_{\min}$ is the minimum retained-zone margin in the nominal $\pm0.8$~mm, $\pm1^\circ$ fit audit; $r_{w,\max}$ is the worst wrench residual against the 0.24 limit; and $c_{w,\max}$ is the largest normalized contact-load concentration. In the deliberately wider pose box, A1--A4 retained 134, 31, 81, and 29 of 160 trials, respectively. All four still cleared their 233 sampled runtime-sweep frames, baseline and post-topology FEA, and watertight reconstruction, but these are bounded digital gates rather than estimates of physical reliability. A1 uses broad, distributed rabbit-body support; A2 locks around a persistent flange waist; A3 keys below external deck and cabin rims rather than entering the hull; and A4 combines neck capture with a spatially separated shoulder saddle to resist roll. Thus the repeated C-wrench label in A2 and A4 denotes a family, while its object-conformal retention surfaces remain case-specific.

The finite runtime sweep covers the table, robot links, installed tool stack, generated mesh, selected object, attachment, placement, unlock, and retract using the recorded capsule-plus-visual-bounds collision model. It is a reproducible digital gate, not evidence of controller tracking or hardware safety. Likewise, nominal material properties, absent printer coupons, and the absence of instrumented fit, $3\times$ load, shake, and 100-cycle records keep every release manifest at \emph{digital screening passed, physical validation pending}.

\subsection{Structural Evidence}

Every preform and post-topology solve passed its nominal stress and displacement limits. Across A1--A4, maximum von Mises stress changed from 0.501--1.537~MPa before optimization to 0.501--1.647~MPa after reconstruction, while maximum displacement changed from 0.061--0.914~mm to 0.061--0.991~mm. The final FE material fractions of 92.0--97.6\% exceed the nominal 0.82 target because mounts, necks, contact surfaces, stops, functional shells, and outer skins are protected. The four results therefore support structural screening and domain containment, but do not justify a general mass-saving or physical-strength claim.

%% file: generated_attempt_rows.tex
% Generated from paper/scangrip_attempts evidence; do not edit by hand.
A1 & Rabbit/bunny & 5184 $\rightarrow$ 5066 & 0.791 $\rightarrow$ 0.791 & 0.299 $\rightarrow$ 0.301 & 92.0 \\
A2 & Camera flange & 5421 $\rightarrow$ 5315 & 0.887 $\rightarrow$ 0.950 & 0.914 $\rightarrow$ 0.961 & 94.8 \\
A3 & 3DBenchy & 7503 $\rightarrow$ 7503 & 0.501 $\rightarrow$ 0.501 & 0.061 $\rightarrow$ 0.061 & 97.6 \\
A4 & Faceted bust & 8382 $\rightarrow$ 8115 & 1.537 $\rightarrow$ 1.647 & 0.893 $\rightarrow$ 0.991 & 93.1 \\

%% file: generated_evidence_rows.tex
% Generated from paper/scangrip_attempts evidence; do not edit by hand.
% Stress fit replays the deterministic audit at +/-3 mm and +/-5 deg (160 trials).
A1 & Rabbit/bunny & 2.347 & 134/160 & 0.000 & 0.382 & 233 clear & pending \\
A2 & Camera flange & 0.452 & 31/160 & 0.006 & 0.432 & 233 clear & pending \\
A3 & 3DBenchy & 1.669 & 81/160 & 0.168 & 0.478 & 233 clear & pending \\
A4 & Faceted bust & 1.101 & 29/160 & 0.144 & 0.602 & 233 clear & pending \\

%% file: 5_conclusion.tex
\section{Discussion and Limitations}
\label{sec:discussion}

The four attempts support a staged co-design rather than a single grasp score. Point-wrench analysis is inexpensive enough for early screening, while distributed conformal geometry and robot motion decide whether a promising anchor set can become a usable tool. The rabbit, flange, boat, and bust mechanisms illustrate how one search framework yields different retention surfaces and insertion constraints. Structural optimization is similarly safest after fit surfaces, flange, motion corridor, and keep-outs are fixed. Because these protected regions occupy much of a compact gripper, target density is not promised mass reduction; exported volume and post-FEA response are the relevant outputs.

The main limitation is the absence of physical validation. The rigid Coulomb and small-perturbation models do not identify true friction, compliance, wear, print bias, or retention force. The robot records include a finite runtime scene sweep, but its capsule-plus-visual-bounds abstraction, controller tracking, calibration uncertainty, and cable motion remain unevaluated; the implemented selected-mesh ICP also lacks a calibrated pose-error dataset. The linear orthotropic FEA uses nominal rather than coupon-calibrated properties and omits nonlinear contact, creep, fatigue, bolt preload, and print defects. Sparse/nonmanifold scans and voxel resolution also affect accessibility and wall gates. Finally, six rigid passive families do not cover compliant, suction, magnetic, jamming, underactuated, or multi-material mechanisms.

The immediate next steps are therefore coupon-calibrated anisotropic material models, physical fit/load/shake/cycle experiments, and replacement of approximate robot-link bounds by fully calibrated triangle-mesh collision checking. Measured dimensional and force data can then replace uniform fit allowances and calibrate the relation between wrench residual, zone margin, and holding force.

\section{Conclusion}

The proposed framework joins deterministic contact screening, six explainable passive mechanisms, conformal geometry, robot-aware insertion, directional H8 finite elements, and protected SIMP topology optimization in one traceable process. The analysis proves preservation of applied resultant force, monotonic compliance sensitivity, and containment of post-processed topology voxels. Four archived dashboard attempts connect each selected object and mechanism to its computational fit, wrench, runtime-sweep, FEA, topology, and release evidence. All pass the recorded digital gates, but remain prototypes until calibrated material and physical fit, load, shake, and cycle tests pass.